%% file: main.tex
\documentclass[10pt,letterpaper]{article}
\usepackage{spconf,amsmath,amssymb,amsthm,graphicx,booktabs}
\usepackage[T1]{fontenc}
\usepackage{newtxmath}
\usepackage[hidelinks]{hyperref}
\hypersetup{pdftitle={Reserve-Aware Contrast Certificates for Conservative Bandits with Uncertain Baselines},pdfsubject={Conservative contextual bandits},pdfkeywords={conservative bandits, confidence sets, safe exploration},pdflang={en-US},pdfauthor={Qinchuan Cheng}}
\usepackage{microtype}
\usepackage{algorithm,algpseudocode}
\newtheorem{proposition}{Proposition}
\newtheorem{theorem}{Theorem}
\newcommand{\R}{\mathbb{R}}
\newcommand{\calA}{\mathcal{A}}
\newcommand{\calC}{\mathcal{C}}
\newcommand{\norm}[1]{\left\lVert#1\right\rVert}
\newcommand{\joint}{\mathrm{J}}
\newcommand{\sep}{\mathrm{S}}
\input{tables/results_macros}
\title{Reserve-Aware Contrast Certificates for Conservative Bandits with Uncertain Baselines}
\input{authors}
\begin{document}
\maketitle
\begin{abstract}
Conservative bandits must improve an incumbent policy without exhausting a prescribed performance budget. When the incumbent is uncertain, separately bounding candidate and baseline rewards can charge twice for shared estimation error. We develop Reserve-C4B around the baseline-relative contrast itself. A shared confidence set yields an exact expression for this avoidable penalty and a tighter admissibility test at every fixed history. A reserve ledger separates statistical evidence from permitted performance deficit; a prefix-refresh extension recertifies accumulated decisions under the current confidence set without discarding previously certified credit. For linear rewards, self-normalized confidence sets provide simultaneous validity over time and adaptively generated candidates, and the resulting policy satisfies a conditional-mean performance constraint with high probability. Reproducible experiments isolate certificate coupling, prefix refresh, and historical information, showing large reductions in baseline fallback while exposing the limitations of frozen certificates.
\end{abstract}
\begin{keywords}
Conservative bandits, sequential decision making, confidence sets, baseline uncertainty, safe exploration
\end{keywords}

\section{Introduction}
Online adaptation in recommendation, sensing, and resource allocation must often retain the reliability of an existing policy. Conservative bandits formalize this requirement by constraining cumulative reward relative to an incumbent while learning from sequential feedback~\cite{wu2016conservative,kazerouni2017conservative}. With an uncertain incumbent, the safety decision is comparative: can an action's reward cover a specified fraction of the baseline's reward? Estimating these two quantities independently can obscure how much of their uncertainty is shared.

Consider two identical actions with a common reward interval $[\ell,U]$. A separate lower--upper comparison gives $\ell-(1-\alpha)U$, which may be negative even when $\ell>0$. The actual contrast is $\alpha\mu$, certified by $\alpha\ell$. The difference, $(1-\alpha)(U-\ell)$, is an uncertainty penalty created by the comparison rather than by the action. Nonidentical actions inherit the same problem when their estimation errors are aligned. A larger initial reserve can postpone rejection, but does not remove this penalty or validate an incorrect certificate.

We study this distinction at the certificate and budget levels. Reserve-C4B uses one confidence set to bound the candidate--baseline contrast, quantifies the price of decoupling that set, and spends only certified credit. Its prefix-refresh extension also pools uncertainty across executed rounds: historical decisions can be recertified as information improves, while a carry-forward bound preserves previously established credit. The analysis makes the source of every budget increment explicit.

\textbf{Relation to prior work.} Unknown baselines are already treated by conservative linear and combinatorial bandits~\cite{kazerouni2017conservative,zhang2019contextual}; improved exploration rules, budget reductions, and nonlinear regression oracles broaden that literature~\cite{garcelon2020improved,yang2021reduction,deb2024beyond}. Robust baseline regret also motivates evaluating two policies under a shared model~\cite{petrik2016safe}. Building on these formulations, we provide a contrast-first certificate analysis, an explicit carry/refresh ledger with an anytime safety proof, and matched ablations that identify where shared uncertainty and historical information change admissibility. Time-uniform off-policy evaluation offers a complementary route to gated deployment~\cite{karampatziakis2021offpolicy}; here the target is the realized action sequence's cumulative conditional means.

\section{Decision Setting and Statistical Evidence}
At round $t$, the learner observes a finite candidate set $\calA_t$, baseline $b_t$, and feature vectors $x_t(a)\in\R^d$. These are measurable before the current reward, conditional on the history and any candidate-generation randomness. Rewards follow
\begin{equation}
 Y_t=\mu_t(a_t)+\eta_t,\qquad \mu_t(a)=x_t(a)^\top\theta_* ,
 \label{eq:model}
\end{equation}
where $\theta_*$ is fixed, $\norm{\theta_*}_2\le S$, features are bounded and predictable, and $\eta_t$ is conditionally $\sigma$-sub-Gaussian with zero mean. Only the selected reward is observed. The baseline mean is unknown; its nonnegativity is known.

Let $c=1-\alpha$, $0<\alpha<1$, and
\begin{equation}
 z_t(a)=x_t(a)-cx_t(b_t),\quad \Delta_t(a)=z_t(a)^\top\theta_* .
\end{equation}
For an externally specified reserve $R_0\ge0$, the requirement is
\begin{equation}
 D_t:=R_0+\sum_{s=1}^t\Delta_s(a_s)\ge0
 \quad\text{for every }t. \label{eq:target}
\end{equation}
$R_0=0$ recovers the strict cumulative conservative constraint. Positive $R_0$ permits an additive deficit; it is not additional statistical confidence. Historical training observations improve estimation but receive no deployment budget credit. Equation~\eqref{eq:target} concerns conditional means, not every noisy realized-reward path.

Let $\mathcal I_t$ index all observations available before round $t$, including the historical sample. Form
\begin{align}
 V_t&=\lambda I+\sum_{i\in\mathcal I_t} x_ix_i^\top,\\
 \widehat\theta_t&=V_t^{-1}\sum_{i\in\mathcal I_t}x_iY_i,\\
 \beta_t&=\sigma\sqrt{\log\!\frac{\det V_t}{\lambda^d}+2\log\!\frac1\delta}
          +\sqrt\lambda S. \label{eq:beta}
\end{align}
The self-normalized result of~\cite{abbasi2011improved} gives
\begin{equation}
 \Pr\!\left\{\theta_*\in\calC_t\ \forall t\right\}\ge1-\delta,
 \quad\calC_t=\{\theta:\norm{\theta-\widehat\theta_t}_{V_t}\le\beta_t\}.
 \label{eq:event}
\end{equation}
This event covers every predictable candidate, even if the generator depends on past observations. No union bound over pool size is required. Marginal prediction coverage for future outcomes~\cite{lei2018distribution} is not a substitute for this simultaneous conditional-mean guarantee.

\section{Contrast Certificates and Reserve Ledgers}
\subsection{The cost of independent uncertainty bounds}
A shared-set certificate is $L_t^\joint(a)=\inf_{\theta\in\calC_t}z_t(a)^\top\theta$. Writing $\norm{v}_t=\sqrt{v^\top V_t^{-1}v}$, its closed form and the separate alternative are
\begin{align}
 L_t^\joint(a)&=z_t(a)^\top\widehat\theta_t-\beta_t\norm{z_t(a)}_t,\\
 L_t^\sep(a)&=z_t(a)^\top\widehat\theta_t
   -\beta_t\big(\norm{x_t(a)}_t+c\norm{x_t(b_t)}_t\big).
 \label{eq:certificates}
\end{align}
\begin{proposition}[Decoupling penalty]\label{prop:penalty}
For the same history and action,
\begin{equation}
 L_t^\joint(a)-L_t^\sep(a)=\beta_t\big(\norm{x_t(a)}_t
 +c\norm{x_t(b_t)}_t-\norm{z_t(a)}_t\big)\ge0.\label{eq:penalty}
\end{equation}
At $a=b_t$, the penalty equals $c(U_t(b_t)-\ell_t(b_t))$, where $\ell_t,U_t$ are the ellipsoid's reward bounds.
\end{proposition}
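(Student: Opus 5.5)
The plan is to compute both certificates from the ellipsoid's support function and subtract, so the identity is pure algebra. I would first record the standard fact behind the closed form: for any $v\in\R^d$,
\[
\inf_{\theta\in\calC_t}v^\top\theta=v^\top\widehat\theta_t-\beta_t\norm{v}_t
\quad\text{and}\quad
\sup_{\theta\in\calC_t}v^\top\theta=v^\top\widehat\theta_t+\beta_t\norm{v}_t .
\]
To justify it, write $\theta=\widehat\theta_t+V_t^{-1/2}u$ with $\norm{u}_2\le\beta_t$. Then $v^\top\theta=v^\top\widehat\theta_t+(V_t^{-1/2}v)^\top u$, and Cauchy--Schwarz gives the extremes $\pm\beta_t\norm{V_t^{-1/2}v}_2=\pm\beta_t\norm{v}_t$. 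Both are attained at $u=\pm\beta_t V_t^{-1/2}v/\norm{V_t^{-1/2}v}_2$, and the case $v=0$ is trivial.

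Applying this with $v=z_t(a)$ gives the stated closed form for $L_t^\joint(a)$. The separate certificate in \eqref{eq:certificates} is the lower bound $\ell_t(a)=x_t(a)^\top\widehat\theta_t-\beta_t\norm{x_t(a)}_t$ minus $c$ times the upper bound $U_t(b_t)=x_t(b_t)^\top\widehat\theta_t+\beta_t\norm{x_t(b_t)}_t$. Because $x_t(a)-cx_t(b_t)=z_t(a)$, the point-estimate terms in the two certificates cancel exactly, and the difference reduces to the displayed expression \eqref{eq:penalty}.

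Nonnegativity comes from the fact that $\norm{\cdot}_t$ is a norm, since $V_t\succeq\lambda I\succ0$ makes $V_t^{-1}$ positive definite. The triangle inequality together with absolute homogeneity, using $c>0$, gives
\[
\norm{z_t(a)}_t\le\norm{x_t(a)}_t+c\norm{x_t(b_t)}_t .
\]
Multiplying by $\beta_t\ge0$ finishes this part. Alternatively, one can note that $L^\sep$ minimizes over two independent copies of $\theta\in\calC_t$ while $L^\joint$ constrains the two copies to be equal, so an infimum over a larger set is smaller.

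For the case $a=b_t$, we have $z_t(b_t)=(1-c)x_t(b_t)=\alpha x_t(b_t)$, and $\alpha>0$. The penalty therefore becomes
\[
\beta_t\bigl(1+c-\alpha\bigr)\norm{x_t(b_t)}_t=2c\,\beta_t\norm{x_t(b_t)}_t .
\]
Since $U_t(b_t)-\ell_t(b_t)=2\beta_t\norm{x_t(b_t)}_t$ by the support-function formula, this equals $c\bigl(U_t(b_t)-\ell_t(b_t)\bigr)$.

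There is no real obstacle here. The only care needed is to use $1+c-\alpha=2c$ and the positivity of $\alpha$ and $c$ so that the norms scale without absolute values.
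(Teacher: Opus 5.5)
Your proposal is correct and takes the same route as the paper. The paper likewise minimizes a linear functional over the ellipsoid, subtracts so the point-estimate terms cancel, uses the triangle inequality for nonnegativity, and settles the baseline case from $z_t(b_t)=\alpha x_t(b_t)$ and $U_t-\ell_t=2\beta_t\norm{x_t(b_t)}_t$; you just spell out the Cauchy--Schwarz support-function derivation and the identity $1+c-\alpha=2c$, which the paper leaves implicit.
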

\noindent\emph{Proof.} Minimizing a linear functional over the ellipsoid gives~\eqref{eq:certificates}. Subtraction gives~\eqref{eq:penalty}, and the triangle inequality establishes nonnegativity. When $a=b_t$, $z_t=\alpha x_t(b_t)$ and $U_t-\ell_t=2\beta_t\norm{x_t(b_t)}_t$.\hfill$\square$

Thus a shared certificate admits every action admitted by a separate certificate \emph{at a fixed history and bank}. This does not imply reward dominance between policies that subsequently collect different data. For the baseline itself, both methods should exploit its identity and known nonnegativity:
\begin{equation}
 L_t(b_t)=\alpha\max\{\ell_t(b_t),0\}\ge0.\label{eq:fallback}
\end{equation}
Rejecting the baseline using the separate formula would be an avoidable implementation error, not an intrinsic obstacle to conservative learning.

\subsection{Carry-forward and prefix refresh}
Initialize $B_0=R_0$. The frozen Reserve-C4B ledger tests $G_t^{\rm carry}(a)=B_{t-1}+L_t^\joint(a)$, using~\eqref{eq:fallback} for $b_t$. Accept the highest-UCB nonbaseline candidate with $G_t^{\rm carry}(a)\ge0$, or execute the baseline if none qualifies. The score is $x_t(a)^\top\widehat\theta_t+\beta_t\norm{x_t(a)}_t$. Set $B_t=G_t^{\rm carry}(a_t)$ before observing $Y_t$.

Frozen certificates cannot recover credit lost to early uncertainty. Under the shared parameter in~\eqref{eq:model}, keep $Z_{t-1}=\sum_{s<t}z_s(a_s)$ and recertify the complete candidate prefix:
\begin{equation}
 Q_t(a)=R_0+(Z_{t-1}+z_t(a))^\top\widehat\theta_t
             -\beta_t\norm{Z_{t-1}+z_t(a)}_t.\label{eq:refresh}
\end{equation}
The refresh extension uses
\begin{equation}
 G_t(a)=\max\{B_{t-1}+L_t(a),Q_t(a)\},\qquad B_t=G_t(a_t),\label{eq:gate}
\end{equation}
with the same UCB selection rule, then updates $Z_t$, observes $Y_t$, and updates the estimator. Refreshing uses newly available evidence, not unearned reward credit. The maximum is essential: a current confidence set need not be nested inside yesterday's set, so a recomputed bound can be worse than the carried certificate.

\begin{theorem}[Anytime conditional-mean safety]\label{thm:safety}
Under~\eqref{eq:model}--\eqref{eq:event}, both ledgers satisfy $D_t\ge B_t\ge0$ at every deployment round with probability at least $1-\delta$.
\end{theorem}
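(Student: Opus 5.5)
We work on the good event $\mathcal E=\{\theta_*\in\calC_t\ \forall t\}$ from~\eqref{eq:event}, which has probability at least $1-\delta$. Every remaining step is deterministic on $\mathcal E$, so the simultaneous-over-time guarantee transfers to all rounds at once. We then prove by induction on $t$ the joint invariant
\begin{equation*}
 D_t\ge B_t\ge 0 \qquad (t=0,1,2,\dots),
\end{equation*}
with base case $D_0=R_0=B_0\ge0$. No further probabilistic argument is needed, since the selection rule only uses quantities measurable before $Y_t$, and coverage on $\mathcal E$ holds for whatever predictable vectors arise.

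\textbf{Pointwise validity of the certificates.} On $\mathcal E$, for every round $t$ and every candidate $a$ we have $L_t^\joint(a)=\inf_{\theta\in\calC_t}z_t(a)^\top\theta\le z_t(a)^\top\theta_*=\Delta_t(a)$. For the baseline, $\ell_t(b_t)\le\mu_t(b_t)$, and the true mean is known to be nonnegative, so $\max\{\ell_t(b_t),0\}\le\mu_t(b_t)$. Since $z_t(b_t)=\alpha x_t(b_t)$, this gives $L_t(b_t)=\alpha\max\{\ell_t(b_t),0\}\le\alpha\mu_t(b_t)=\Delta_t(b_t)$, and in addition $L_t(b_t)\ge0$. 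For the refresh term, the same infimum argument applied to the fixed vector $w=Z_{t-1}+z_t(a)$ gives $Q_t(a)=R_0+\inf_{\theta\in\calC_t}w^\top\theta\le R_0+w^\top\theta_*$. Because $Z_{t-1}$ records exactly the executed contrasts, $Z_{t-1}^\top\theta_*=\sum_{s<t}\Delta_s(a_s)$, and hence $Q_t(a_t)\le D_t$. This step relies on the shared fixed $\theta_*$ in~\eqref{eq:model}.

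\textbf{Inductive step.} Assume $D_{t-1}\ge B_{t-1}\ge0$.

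For the carry ledger, $B_t=B_{t-1}+L_t(a_t)\le D_{t-1}+\Delta_t(a_t)=D_t$. For the refresh ledger, $B_t$ is the maximum of two terms, each of which is at most $D_t$ by the previous paragraph, so $B_t\le D_t$.

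Nonnegativity follows from the gate. A nonbaseline action is executed only if $G_t(a_t)\ge0$, so then $B_t\ge0$. If the baseline is executed, then $B_t\ge B_{t-1}+L_t(b_t)\ge B_{t-1}\ge0$, because the maximum in~\eqref{eq:gate} is at least its carry term. The baseline is therefore always admissible, so the fallback never breaks the invariant.

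\textbf{Main obstacle.} The delicate point is the refresh term. One must check that recertifying the whole prefix under $\calC_t$ is legitimate even though $\calC_t$ depends on the same rewards that generated $Z_{t-1}$. This is fine because~\eqref{eq:event} is a statement about $\theta_*$ itself, not about particular directions, so it covers data-dependent directions such as $Z_{t-1}+z_t(a)$. One must also make sure the maximum never exceeds $D_t$, which holds because both of its arms are valid lower bounds. The non-nestedness of the confidence sets is harmless, since the carried arm preserves previously certified credit.
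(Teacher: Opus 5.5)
Your proof is correct and follows essentially the same route as the paper. You work on the simultaneous containment event~\eqref{eq:event}, show $L_t(a_t)\le\Delta_t(a_t)$ and $Q_t(a_t)\le R_0+Z_t^\top\theta_*=D_t$, conclude by induction from $B_0=D_0=R_0$ that the carry term, the refresh term and hence their maximum lower-bound $D_t$, and get $B_t\ge0$ from the gate plus the always-admissible baseline. The only difference is that you spell out two steps the paper leaves terse: that $\alpha\max\{\ell_t(b_t),0\}\le\alpha\mu_t(b_t)=\Delta_t(b_t)$ uses the known nonnegativity of the baseline mean, and that executing the baseline gives $B_t\ge B_{t-1}+L_t(b_t)\ge B_{t-1}\ge0$.
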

\noindent\emph{Proof.} On the common event~\eqref{eq:event}, $L_t(a_t)\le\Delta_t(a_t)$. If $B_{t-1}\le D_{t-1}$, the carry bound is at most $D_t$. Also, $Q_t(a_t)$ lower-bounds $R_0+Z_t^\top\theta_*=D_t$. Their maximum remains a lower bound. The gate ensures nonnegativity, and the baseline is always feasible by~\eqref{eq:fallback}. Induction starts at $B_0=D_0=R_0$.\hfill$\square$

\begin{algorithm}[t]
\caption{Reserve-C4B with optional prefix refresh}
\label{alg:policy}
\begin{algorithmic}[1]
\State Initialize estimator from history; $B\gets R_0$, $Z\gets0$.
\For{deployment rounds $t=1,\ldots,T$}
  \State Observe candidates and baseline; form $\calC_t$.
  \State Compute UCB scores and contrast certificates.
  \State Set $G(a)\gets B+L_t(a)$ for every action.
  \If{refresh is enabled}
    \State $G(a)\gets\max\{G(a),Q_t(a)\}$ using $Z$.
  \EndIf
  \State Choose highest-UCB candidate with $G(a)\ge0$;
  \Statex \hspace{\algorithmicindent}\hspace{\algorithmicindent}use baseline when no candidate qualifies.
  \State Log $B\gets G(a_t)$; set $Z\gets Z+z_t(a_t)$.
  \State Execute $a_t$, observe $Y_t$, update estimator.
\EndFor
\end{algorithmic}
\end{algorithm}

The proof remains valid for adaptively chosen candidates because a single parameter-containment event covers all of them. Without a certified nonnegative fallback, the policy must stop when no action qualifies. With dense covariance matrices, candidate evaluation costs $O(|\calA_t|d^2)$ per round; rank-one updates cost $O(d^2)$. Refresh adds only the $d$-vector $Z_t$, not storage of the full trajectory.

\subsection{How much reserve does decoupling consume?}
For a fixed action path and its frozen certificates $L_1,\ldots,L_T$, define the smallest reserve that makes every certified prefix nonnegative. It has the exact form
\begin{equation}
 R_{\min}(L)=\max\left\{0,\max_{t\le T}\left[-\sum_{s\le t}L_s\right]\right\}.
 \label{eq:minreserve}
\end{equation}
\begin{proposition}[Fixed-path reserve cost]\label{prop:reserve}
Let $\Pi_s=L_s^\joint-L_s^\sep\ge0$ along the same path and confidence sequence, using the same structural baseline bound in both ledgers. Then
\begin{equation}
 0\le R_{\min}(L^\sep)-R_{\min}(L^\joint)\le\sum_{s=1}^T\Pi_s.
 \label{eq:reservecost}
\end{equation}
\end{proposition}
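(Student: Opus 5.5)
The plan is to view $R_{\min}$ as a functional of the certificate sequence and show it is monotone and $1$-Lipschitz with respect to the prefix sums of $L$. We write $S_t(L)=\sum_{s\le t}L_s$ with $S_0=0$. Equation~\eqref{eq:minreserve} can then be rewritten as $R_{\min}(L)=\max_{0\le t\le T}[-S_t(L)]$, because including $t=0$ contributes the outer $0$. This reduces the claim to comparing two maxima of linear functionals of the same index set.

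\emph{Lower bound.} By Proposition~\ref{prop:penalty}, we have $\Pi_s\ge0$ for every nonbaseline round. At baseline rounds both ledgers use the same structural bound~\eqref{eq:fallback}, so $\Pi_s=0$ there. Hence $S_t(L^\joint)=S_t(L^\sep)+\sum_{s\le t}\Pi_s\ge S_t(L^\sep)$ for every $t$. Therefore $-S_t(L^\sep)\ge -S_t(L^\joint)$ termwise, and taking the maximum over $t$ gives $R_{\min}(L^\sep)\ge R_{\min}(L^\joint)$.

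\emph{Upper bound.} For each $t$, the same identity gives $-S_t(L^\sep)=-S_t(L^\joint)+\sum_{s\le t}\Pi_s$. By nonnegativity of the $\Pi_s$, the right side is at most $-S_t(L^\joint)+\sum_{s=1}^T\Pi_s$. This also holds at $t=0$, where it reads $0\le 0+\sum_s\Pi_s$. Maximizing over $t$ and using the fact that $\max_t(f_t+C)=\max_t f_t+C$ for the constant $C=\sum_s\Pi_s$ yields $R_{\min}(L^\sep)\le R_{\min}(L^\joint)+\sum_{s=1}^T\Pi_s$.

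The only delicate point is bookkeeping rather than analysis. The comparison must use the \emph{same} action path and confidence sequence, so that $\Pi_s$ is exactly the pointwise penalty of Proposition~\ref{prop:penalty}. The baseline rounds must also be treated through the shared bound~\eqref{eq:fallback}, which is what keeps $\Pi_s\ge0$ there. Once those two points are fixed, both inequalities follow from termwise comparison inside a maximum.
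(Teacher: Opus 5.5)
Your proof is correct and follows the paper's own argument. It uses the prefix identity $\sum_{s\le t}L_s^\joint=\sum_{s\le t}L_s^\sep+\sum_{s\le t}\Pi_s$, compares termwise inside the prefix maximum for the lower bound, and bounds each partial penalty by the total $\sum_{s=1}^T\Pi_s$ for the upper bound, with your inclusion of $t=0$ (empty sum zero) neatly absorbing the outer $\max\{0,\cdot\}$ that the paper leaves implicit.
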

\noindent\emph{Proof.} Every separate-bound deficit equals the corresponding joint-bound deficit plus $\sum_{s\le t}\Pi_s$. Taking prefix maxima proves the lower inequality; bounding every prefix penalty by its total proves the upper inequality. Equation~\eqref{eq:minreserve} follows directly from the prefix constraints.\hfill$\square$

This is a certificate-level reserve requirement, not the reserve an online policy can know in advance. It explains why decoupling can consume an entire exploration allowance even when each reward interval is individually valid. Prefix refresh addresses the remaining cost of freezing early certificates; it changes the ledger, rather than simply increasing $R_0$.

\section{Experiments}
\subsection{Reproducible protocol and comparisons}
We use a controlled linear study to separate uncertainty geometry from model misspecification. Set $d=5$, $\theta_*=(1,0.6,0,0,0)$, $x(b)=(1,0,0,0,0)$, and draw 32 candidates $x(a)=(1,\rho u)$ per round, with $u$ uniform on the unit sphere in $\R^4$. Gaussian noise has standard deviation $\sigma$. The learner knows $S=1.5$ and $\sigma$, but not $\theta_*$ or the baseline mean. With $\rho\le0.8$, all means are nonnegative.

Each episode has 20 historical observations followed by 200 deployment rounds. Histories are either diverse, with features $(1,u)$, or baseline-only. We test every combination of $\rho\in\{0.15,0.4,0.8\}$, $\sigma\in\{0.1,0.3\}$, and $R_0\in\{0,0.5,2\}$ under both histories: 36 settings, 256 independent episodes per setting, with $\alpha=\delta=0.05$ and $\lambda=0.1$. Methods share historical samples, candidate pools, and potential noise within each episode. Scenario reuse is not treated as independent replication. Intervals use Student's $t$ over episodes, pairing method differences within episodes.

\textbf{Separate} and \textbf{Contrast} use the frozen ledger with $L^\sep$ and $L^\joint$, respectively. \textbf{Refresh} adds~\eqref{eq:gate}. \textbf{Revalue} applies the unknown-baseline cumulative separate-bound geometry of~\cite{kazerouni2017conservative}, checking only the optimistic proposal; \textbf{Revalue-F} instead selects the best admissible proposal. Both use the current ellipsoid and structural fallback, and are explicit ablations rather than reproductions of the published nested-set algorithm. \textbf{LinUCB} has no safety gate. Revalue-F controls for action filtering when comparing refresh strategies.

\begin{table}[t]
\centering
\caption{Mean reward ratio / fallback percentage at $\rho=0.15$, $\sigma=0.3$, $R_0=0$ (256 episodes). Reward is normalized by the true baseline mean, used only by the evaluator.}
\label{tab:central}
\input{tables/central}
\end{table}

\begin{figure}[t]
\centering
\includegraphics[width=\columnwidth]{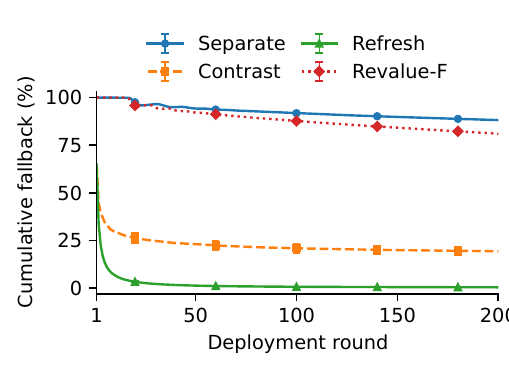}
\caption{Cumulative fallback under diverse history ($\rho=0.15$, $\sigma=0.3$, $R_0=0$). Curves average 256 episodes; error bars mark pointwise 95\% intervals at selected rounds.}
\label{fig:trajectory}
\end{figure}

\subsection{What coupling and refresh change}
Table~\ref{tab:central} isolates the two effects. With diverse history, replacing Separate by Contrast reduces fallback from \SepFallback\% to \ContrastFallback\% and increases the reward ratio by \ContrastGain{} (paired 95\% interval: \ContrastGainLow{}--\ContrastGainHigh). Refresh reduces fallback further to \RefreshFallback\%. Figure~\ref{fig:trajectory} shows that this is sustained throughout deployment rather than confined to initialization.

Baseline-only history exposes the limitation of a frozen ledger. Contrast still falls back on \BaselineContrastFallback\% of rounds: repeatedly observing the incumbent does not identify all action directions, and previous pessimistic charges remain in the ledger. Refresh reduces fallback to \BaselineRefreshFallback\%, with reward ratio \BaselineRefreshReward{} versus \BaselineRevalueFilterReward{} for Revalue-F. Its paired reward gain over Revalue-F is \RefreshVsRevalueGain{} (95\% interval: \RefreshVsRevalueLow{}--\RefreshVsRevalueHigh). This comparison separates shared-prefix certification from merely choosing a different admissible action.

\begin{figure}[t]
\centering
\includegraphics[width=\columnwidth]{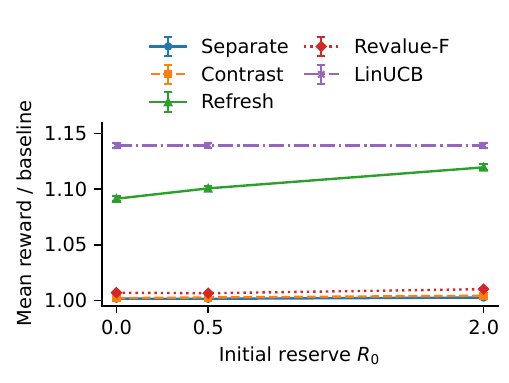}
\caption{Reserve sensitivity with baseline-only history ($\rho=0.4$, $\sigma=0.3$). Points are means with pointwise 95\% intervals over 256 episodes; lines connect the tested reserve values.}
\label{fig:reserve}
\end{figure}

\begin{table}[t]
\centering
\caption{Zero-reserve reward ratios across noise levels and candidate radii (256 episodes per setting). C: Contrast; C+: Refresh; R-F: Revalue-F; U: LinUCB.}
\label{tab:grid}
\input{tables/zero_reserve}
\end{table}

Table~\ref{tab:grid} varies candidate radius and observation noise without changing the reserve. Refresh improves on the frozen contrast ledger in every displayed setting. Its gap to unconstrained LinUCB is larger with baseline-only history and wider candidate variation, reflecting the cost of learning unfamiliar directions under a prefix constraint. The revalued comparison is also informative: increasing the admissible choice set can change the data collected, so Revalue-F need not outperform the optimistic-proposal Revalue policy on reward. Fixed-history inclusion is a geometric statement, not a ranking of complete learning trajectories.

Increasing reserve relaxes the performance requirement and can improve exploration (Fig.~\ref{fig:reserve}), but does not eliminate frozen-certificate conservatism. The baseline-only setting shows why reserve and historical information must be varied separately.

\textbf{Safety and scope.} All five gated variants had zero conditional-mean prefix violations in all tested settings, and all evaluated confidence events held. Unconstrained LinUCB violated the zero-reserve constraint in \LinBaselineViolations\% of episodes in Table~\ref{tab:central}'s baseline-only setting. These observations check implementation behavior; the guarantee comes from Theorem~\ref{thm:safety}, not from empirical coverage alone. Zero failures in 256 episodes imply a one-sided 95\% binomial upper bound of \FailureUpper\% for one fixed setting, not a simultaneous guarantee across the grid. The study does not establish robustness to nonlinear rewards, drifting parameters, misspecified noise bounds, or performance on live retail traffic. The comparisons isolate certificate mechanisms at a fixed horizon.

\section{Conclusion}
Conservative decisions should be certified in the same comparative coordinates as their performance constraint. Shared contrast certificates remove a precisely quantifiable uncertainty penalty; prefix refresh prevents early pessimism from becoming permanent budget debt. Together, they give a transparent reserve-aware filter with an anytime conditional-mean guarantee and measurable reductions in fallback under a fully reproducible linear model.

\par\medskip
{\centering\bfseries ACKNOWLEDGMENTS\par}
\nobreak\smallskip\noindent
ChatGPT (OpenAI) assisted with writing and idea refinement throughout the paper, and with analysis, code, and figures in Sections~2--4.

\clearpage
\bibliographystyle{IEEEbib}
\bibliography{references}
\end{document}

%% file: tables/results_macros.tex
\newcommand{\SepFallback}{88.2}
\newcommand{\ContrastFallback}{19.4}
\newcommand{\RefreshFallback}{0.5}
\newcommand{\ContrastGain}{0.0473}
\newcommand{\ContrastGainLow}{0.0456}
\newcommand{\ContrastGainHigh}{0.0490}
\newcommand{\BaselineContrastFallback}{91.8}
\newcommand{\BaselineRefreshFallback}{12.1}
\newcommand{\BaselineRefreshReward}{1.0190}
\newcommand{\BaselineRevalueFilterReward}{1.0011}
\newcommand{\RefreshVsRevalueGain}{0.0179}
\newcommand{\RefreshVsRevalueLow}{0.0169}
\newcommand{\RefreshVsRevalueHigh}{0.0190}
\newcommand{\LinBaselineViolations}{16.4}
\newcommand{\FailureUpper}{1.16}

%% file: authors.tex
\input{author_metadata}
\name{\AuthorName}
\address{\AuthorSchool, \AuthorUniversity, \AuthorLocation\\
\href{mailto:\AuthorEmail}{\AuthorEmail}\quad
ORCID: \href{https://orcid.org/\AuthorORCID}{\AuthorORCID}}

%% file: author_metadata.tex
\newcommand{\AuthorName}{Qinchuan Cheng}
\newcommand{\AuthorSchool}{School of Automation Science and Engineering}
\newcommand{\AuthorUniversity}{Xi'an Jiaotong University}
\newcommand{\AuthorLocation}{Xi'an, China}
\newcommand{\AuthorEmail}{2310820636@qq.com}
\newcommand{\AuthorORCID}{0009-0005-9325-7554}

%% file: tables/central.tex
\begin{tabular}{lrr}
\toprule
Method & Diverse history & Baseline-only \\
\midrule
Separate & 1.0084 / 88.2 & 1.0007 / 94.1 \\
Contrast & 1.0557 / 19.4 & 1.0011 / 91.8 \\
\textbf{Refresh} & 1.0606 / 0.5 & 1.0190 / 12.1 \\
Revalue & 1.0124 / 81.6 & 1.0025 / 83.8 \\
Revalue-F & 1.0121 / 81.1 & 1.0011 / 80.0 \\
LinUCB & 1.0613 / 0.0 & 1.0272 / 0.0 \\
\bottomrule
\end{tabular}

%% file: tables/zero_reserve.tex
\begin{tabular}{rrrrrr}
\toprule
$\rho$ & $\sigma$ & C & C+ & R-F & U \\
\midrule
\multicolumn{6}{l}{\emph{Diverse history}} \\
0.15 & 0.1 & 1.077 & 1.077 & 1.051 & 1.077 \\
0.15 & 0.3 & 1.056 & 1.061 & 1.012 & 1.061 \\
0.4 & 0.1 & 1.205 & 1.205 & 1.183 & 1.205 \\
0.4 & 0.3 & 1.099 & 1.164 & 1.038 & 1.173 \\
0.8 & 0.1 & 1.414 & 1.414 & 1.401 & 1.415 \\
0.8 & 0.3 & 1.218 & 1.358 & 1.175 & 1.377 \\
\midrule
\multicolumn{6}{l}{\emph{Baseline-only history}} \\
0.15 & 0.1 & 1.008 & 1.047 & 1.015 & 1.048 \\
0.15 & 0.3 & 1.001 & 1.019 & 1.001 & 1.027 \\
0.4 & 0.1 & 1.018 & 1.163 & 1.121 & 1.184 \\
0.4 & 0.3 & 1.002 & 1.091 & 1.007 & 1.139 \\
0.8 & 0.1 & 1.045 & 1.311 & 1.297 & 1.397 \\
0.8 & 0.3 & 1.003 & 1.198 & 1.065 & 1.350 \\
\bottomrule
\end{tabular}